%% file: main_arxiv.tex
\documentclass{article}
\input{packages}
\input{macros}
\input{arxiv_preamble}

\title{Understanding LLM Parameter Update Sparsity through the Lens of Fisher}
\author{
  Yufan Zhang \qquad Sagnik Mukherjee \qquad Hao Peng  
  \\[2ex]
  University of Illinois Urbana-Champaign
  \\
    \texttt{\{yufanzh, sagnikm3, haopeng\}@illinois.edu}
}

\begin{document}

\maketitle

\input{sections/00_abstract}
    \vspace{-5mm}
\begin{figure}[H]
    \centering
    \includegraphics[width=\linewidth]{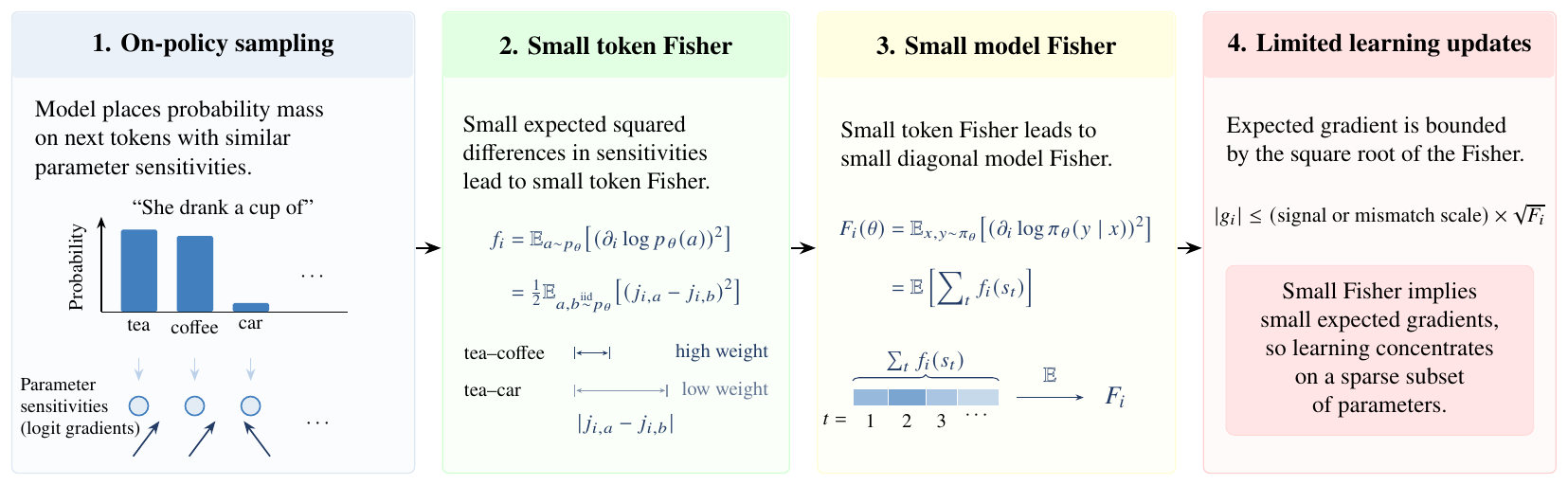}
    \vspace{-6mm}
\caption{Logical chain of how on-policy training gives rise to low Fisher and
concentrated learning.
Section~\ref{sec:why-small-fisher} studies the first two
links, showing how
on-policy sampling can produce small token Fisher and how these token-level
contributions accumulate into small model Fisher.
Section~\ref{sec:fisher_bound_theory} establishes the third link, showing
theoretically how small model Fisher limits expected gradients.
Sections~\ref{sec:test1} and ~\ref{sec:test2} empirically verify whether Fisher identifies
where gradients and effective learning are concentrated.
}
    \label{fig:fisher-chain}
\end{figure}
\input{sections/01_introduction}
\input{sections/02_background_related_work}
\input{sections/03_fisher_concentrated_learning}
\input{sections/04_low_fisher_structure}
\input{sections/05_discussion}

\input{sections/06_conclusion}

\input{sections/07_acknowledgments}

\input{main_arxiv.bbl}
\clearpage
\appendix

\input{sections/appendix_derivations}
\input{sections/appendix_experiment_settings}
\input{sections/appendix_experiment_results}
\end{document}

%% file: packages.tex
\usepackage{times}
\usepackage{fancyhdr}
\usepackage{natbib}
\setcitestyle{authoryear,round,citesep={;},aysep={,},yysep={;}}
\usepackage{amsmath,amssymb}
\usepackage{booktabs,tabularx,graphicx}
\usepackage{float}
\usepackage{amsthm}
\newtheorem{proposition}{Proposition}

\usepackage{tcolorbox}

\newtcolorbox{takeaway}{
  colback=blue!3,
  colframe=blue!35,
  boxrule=0.5pt,
  arc=2pt,
  left=6pt,
  right=6pt,
  top=4pt,
  bottom=4pt,
  before skip=6pt,
  after skip=6pt
}

\newtcolorbox{interpretation}{
colback=yellow!5!white,
colframe=orange!30,
  boxrule=0.5pt,
  arc=2pt,
  left=6pt,
  right=6pt,
  top=4pt,
  bottom=4pt,
  before skip=6pt,
  after skip=6pt
}

\newtcolorbox{statementbox}{
  colback=red!4,
  colframe=red!28,
  boxrule=0.5pt,
  arc=2pt,
  left=6pt,
  right=6pt,
  top=4pt,
  bottom=4pt,
  before skip=6pt,
  after skip=6pt
}

\usepackage{hyperref,url}

%% file: macros.tex
\newcommand{\E}{\mathbb{E}}
\newcommand{\Var}{\operatorname{Var}}
\newcommand{\KL}{D_{\mathrm{KL}}}
\newcommand{\sg}{\operatorname{sg}}

%% file: arxiv_preamble.tex
\makeatletter
\renewcommand{\maketitle}{\par
  \begingroup
    \renewcommand{\thefootnote}{\fnsymbol{footnote}}
    \@maketitle\@thanks
  \endgroup
  \setcounter{footnote}{0}
  \let\maketitle\relax
  \let\@maketitle\relax
  \gdef\@thanks{}\gdef\@author{}\gdef\@title{}\let\thanks\relax}
\renewcommand{\@maketitle}{\vbox{\hsize\textwidth
  {\LARGE\scshape\@title\par}
  \begin{tabular}[t]{l}\bfseries\rule{0pt}{24pt}\@author\end{tabular}
  \vskip 0.3in minus 0.1in}}
\renewenvironment{abstract}
  {\vskip .075in\centerline{\large\scshape Abstract}\vspace{0.5ex}\begin{quote}}
  {\par\end{quote}\vskip 1ex}

\renewcommand{\section}{\@startsection{section}{1}{\z@}
  {-2.0ex plus -0.5ex minus -.2ex}{1.5ex plus .3ex minus .2ex}
  {\large\scshape\raggedright}}
\renewcommand{\subsection}{\@startsection{subsection}{2}{\z@}
  {-1.8ex plus -.5ex minus -.2ex}{.8ex plus .2ex}
  {\normalsize\scshape\raggedright}}
\renewcommand{\subsubsection}{\@startsection{subsubsection}{3}{\z@}
  {-1.5ex plus -.5ex minus -.2ex}{.5ex plus .2ex}
  {\normalsize\scshape\raggedright}}
\renewcommand{\paragraph}{\@startsection{paragraph}{4}{\z@}
  {1.5ex plus .5ex minus .2ex}{-1em}{\normalsize\bfseries}}
\renewcommand{\subparagraph}{\@startsection{subparagraph}{5}{\z@}
  {1.5ex plus .5ex minus .2ex}{-1em}{\normalsize\scshape}}

\renewcommand{\footnoterule}{\kern-3pt\hrule width 12pc\kern 2.6pt}
\renewcommand{\@listi}{\leftmargin\leftmargini}
\renewcommand{\@listii}{\leftmargin\leftmarginii
  \labelwidth\leftmarginii\advance\labelwidth-\labelsep
  \topsep 2pt plus 1pt minus .5pt
  \parsep 1pt plus .5pt minus .5pt\itemsep\parsep}
\renewcommand{\@listiii}{\leftmargin\leftmarginiii
  \labelwidth\leftmarginiii\advance\labelwidth-\labelsep
  \topsep 1pt plus .5pt minus .5pt
  \parsep\z@\partopsep .5pt plus 0pt minus .5pt\itemsep\topsep}
\renewcommand{\@listiv}{\leftmargin\leftmarginiv
  \labelwidth\leftmarginiv\advance\labelwidth-\labelsep}
\renewcommand{\@listv}{\leftmargin\leftmarginv
  \labelwidth\leftmarginv\advance\labelwidth-\labelsep}
\renewcommand{\@listvi}{\leftmargin\leftmarginvi
  \labelwidth\leftmarginvi\advance\labelwidth-\labelsep}
\belowdisplayskip\abovedisplayskip
\renewcommand{\normalsize}{\@setsize\normalsize{11pt}\xpt\@xpt}
\renewcommand{\small}{\@setsize\small{10pt}\ixpt\@ixpt}
\renewcommand{\footnotesize}{\@setsize\footnotesize{10pt}\ixpt\@ixpt}
\renewcommand{\scriptsize}{\@setsize\scriptsize{8pt}\viipt\@viipt}
\renewcommand{\tiny}{\@setsize\tiny{7pt}\vipt\@vipt}
\renewcommand{\large}{\@setsize\large{14pt}\xiipt\@xiipt}
\renewcommand{\Large}{\@setsize\Large{16pt}\xivpt\@xivpt}
\renewcommand{\LARGE}{\@setsize\LARGE{20pt}\xviipt\@xviipt}
\renewcommand{\huge}{\@setsize\huge{23pt}\xxpt\@xxpt}
\renewcommand{\Huge}{\@setsize\Huge{28pt}\xxvpt\@xxvpt}
\makeatother

\hypersetup{
  pdftitle={Understanding LLM Parameter Update Sparsity through the Lens of Fisher},
  pdfauthor={Yufan Zhang, Sagnik Mukherjee, Hao Peng},
  pdfsubject={Preprint}
}

%% file: sections/00_abstract.tex
\begin{abstract}
Recent studies have observed that parameter changes during language-model
post-training can be concentrated in a small subset of coordinates. This
phenomenon has been reported in reinforcement learning, on-policy distillation,
and supervised fine-tuning on near-policy data. Its recurrence across different post-training paradigms suggests shared structure in training dynamics. In this paper, we
examine this pattern through the diagonal model Fisher, which measures the
sensitivity of the model's output distribution to individual parameters and is
independent of any particular reward or teacher signal. Theoretically, we show
that small diagonal Fisher leads to small expected gradients across a range of training
objectives, providing a common explanation for sparse gradient updates. Empirically, we test this connection in RL and OPD. We find that Fisher identifies where
gradients are concentrated, and fixed sparse masks selected from the initial
Fisher retain a large proportion of the improvement from full training. Finally, we investigate the
mechanisms underlying low Fisher in on-policy training. Our results show that high-probability next tokens tend to have similar parameter sensitivities, contributing to low Fisher. Together, these results establish the diagonal
model Fisher as a unifying perspective linking update sparsity to on-policy
training dynamics in LLM post-training.
\end{abstract}

%% file: sections/01_introduction.tex
\section{Introduction}
\label{sec:intro}

Post-training methods such as supervised fine-tuning (SFT), reinforcement
learning (RL), and on-policy distillation (OPD) are widely used to adapt
pretrained language models to downstream tasks and desired behaviors
\citep{ouyang2022instructgpt,shao2024deepseekmath,deepseekai2025r1,agarwal2024onpolicy}.
Despite different objectives, these methods exhibit a similar pattern in
parameter space: parameter changes can be highly concentrated. Recent work
has found that only a small fraction of parameters undergo substantial changes
during RL, OPD, and SFT on near-policy data, and that sparse subsets identified
from training outcomes can retain much of the benefit of full optimization
\citep{mukherjee2025subnetworks,mukherjee2026we,yu2026dense}.
This recurrence across post-training paradigms raises a fundamental question:

\begin{statementbox}
\textbf{Question.}
\emph{What common structure underlies the concentration of learning in a small
subset of parameters during language-model post-training?}
\end{statementbox}

Prior work has proposed several explanations for update sparsity \citep{mukherjee2025subnetworks,mukherjee2026we, zhu2025path,shenfeld2025razor,
miahi2026understanding}, as discussed
in Section~\ref{sec:background}. These works explain sparsity from different perspectives, but do not provide
a common mechanism linking on-policy sampling to where learning is concentrated
across post-training methods.
In this paper, we study the question through the diagonal
model Fisher (defined in Equation~\ref{eq:intro-fisher}), which measures how sensitive
the model's own response distribution is to individual parameters.
Unlike the gradient induced by a particular training objective, the Fisher
depends only on the model and its response distribution, not on a particular
reward or teacher signal.
One key observation is that
the expected gradient for every parameter coordinate can be bounded in the
generic form (Section~\ref{sec:fisher_bound_theory})
\begin{equation}
\label{eq:intro-bound}
|g_i|
\leq
\text{signal or mismatch scale}\times\sqrt{F_i},
\end{equation}
where \(F_i\) is the diagonal model Fisher and \(g_i\) is the expected gradient for parameter
coordinate \(i\). When the objective-specific signal or mismatch scale is controlled, a small
Fisher therefore implies a small expected gradient. This suggests the following answer:

\begin{statementbox}
\textbf{Answer.}
\emph{Diagonal model Fisher provides an objective-independent indicator for
where learning can be concentrated during language-model post-training.}
\end{statementbox}

\begin{figure}[t]
\centering
\includegraphics[width=0.8\linewidth]{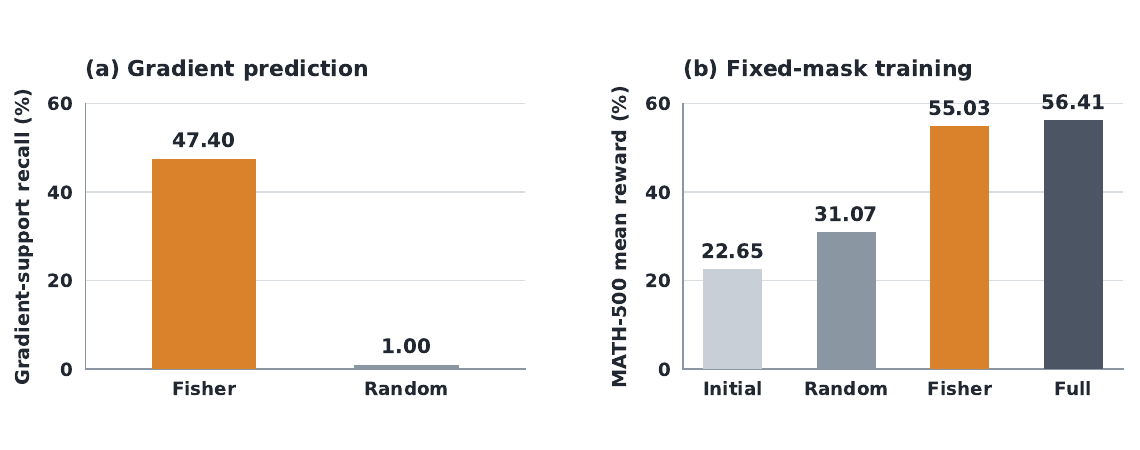}
\vspace{-6mm}
\caption{Fisher identifies a small set of parameters that supports learning.
(a) Before training, the top 1\% Fisher coordinates contain 47.40\% of the
top 1\% gradient coordinates, compared with a 1.00\% random baseline.
(b) Updating only this Fisher-selected subset achieves much of the performance
gain from updating all parameters (Full) and substantially outperforms an
equally sized random subset. Results are for Qwen3-1.7B-Base trained with RL
for mathematical reasoning.}
\label{fig:overview}
\end{figure}

 We then empirically verify
whether Fisher remains informative in actual training. In
Section~\ref{sec:test1}, we find that high-Fisher coordinates capture a much
larger fraction of the largest-gradient coordinates than equally sized random
subsets. In Section~\ref{sec:test2}, updating only parameters with high initial
Fisher while freezing the rest retains a substantial portion of the improvement
achieved by full training. These results empirically support our Fisher-based
explanation of concentrated learning.

We next turn to the low-Fisher pattern itself, asking what gives rise to it in
on-policy training. In Section~\ref{sec:why-small-fisher}, we find that, for
many parameters, the tokens favored by the model tend to respond similarly to
parameter changes. We show that token Fisher is determined by probability-weighted differences
in parameter sensitivities across candidate tokens.
This leads to our central mechanistic explanation for low Fisher in on-policy
training:

\begin{statementbox}
\textbf{Explanation.}
\emph{Fisher is small in on-policy training because the model's probability
mass aligns with next tokens that have similar parameter sensitivities.}
\end{statementbox}

These results bring together the two parts of our analysis: on-policy
generation can give rise to low Fisher through similarity in the parameter
sensitivities of likely next tokens, while low Fisher in turn limits expected
gradients and concentrates learning on a sparse set of parameters. The
overall logical chain is illustrated by Figure~\ref{fig:fisher-chain}, which summarizes the major steps toward our
conclusion.

%% file: sections/02_background_related_work.tex
\section{Background}
\label{sec:background}

\paragraph{Diagonal model Fisher.}
Let \(x\) denote a prompt drawn from a fixed input distribution \(\mathcal D_x\),
\(y\) a model response, and \(\pi_\theta(y\mid x)\) the response distribution
of a model with parameters \(\theta\). For parameter \(\theta_i\),
the diagonal model Fisher is defined as
\begin{equation}
\label{eq:intro-fisher}
F_i(\theta)
=
\E_{x\sim\mathcal D_x}
\E_{y\sim\pi_\theta(\cdot\mid x)}
\left[
\left(
\partial_i\log\pi_\theta(y\mid x)
\right)^2
\right].
\end{equation}
It measures the sensitivity of the model's own response distribution to a
local perturbation of each parameter. For a fixed model and prompt
distribution, it does not depend on the reward, teacher, or demonstration
responses used for post-training.

\paragraph{A common view of post-training objectives.}
We write the training objectives and update surrogates in the common form
\(\max_\theta J(\theta)\), where
\[
J(\theta)
=
\E_{x\sim\mathcal D_x}
\E_{y\sim\mu(\cdot\mid x)}
[R_\theta(x,y)].
\]
Different post-training methods vary in the response distribution \(\mu\)
and the learning signal \(R_\theta\). This common notation allows us to
compare their gradient structure directly. As shown in
Section~\ref{sec:fisher_bound_theory}, despite these differences, their
expected coordinate gradients admit bounds with the same dependence on the
diagonal model Fisher. This common dependence is what allows Fisher to provide
a unified view across post-training objectives.

For autoregressive generation, let \(s_t=(x,y_{<t})\) denote the prefix at
step \(t\) and \(p_\theta(a\mid s_t)\) the corresponding next-token
distribution. We consider SFT, on-policy RL, PG-style OPD, and full-vocabulary
OPD with forward KL, as summarized in Table~\ref{tab:objectives}. Here,
on-policy means that responses are sampled from the current model
\(\pi_\theta(\cdot\mid x)\).

\begin{table}[t]
\centering
\caption{Post-training objectives and update conventions. The demonstration
distribution \(\nu\), teacher \(\pi_T\), and reward \(r\) are fixed, and
\(R_T(x,y)=\sum_t\log[\pi_T(y_t\mid s_t)/p_\theta(y_t\mid s_t)]\).
For PG-style OPD, \(R_T\) is treated as a fixed reward in the
policy-gradient estimator. For full-vocabulary OPD, the rollout distribution
is held fixed during differentiation, and gradients are taken only through
the student's next-token probabilities.}
\label{tab:objectives}
\begin{tabularx}{\linewidth}{l l X}
\toprule
Method & Response distribution \(\mu\) & Learning signal \(R_\theta\) \\
\midrule
SFT & \(\nu(y\mid x)\) & \(\log\pi_\theta(y\mid x)\) \\
On-policy RL & \(\pi_\theta(y\mid x)\) & \(r(x,y)\) \\
PG-style OPD & \(\pi_\theta(y\mid x)\) & \(R_T(x,y)\) \\
Full-vocabulary OPD & \(\pi_\theta(y\mid x)\) &
\(-\sum_t\KL(\pi_T(\cdot\mid s_t)\Vert p_\theta(\cdot\mid s_t))\) \\
\bottomrule
\end{tabularx}
\end{table}

\paragraph{Related work.}

\citet{mukherjee2025subnetworks} show that RL
concentrates updates in small subnetworks whose isolated training can recover
full-training performance. \citet{yu2026dense} report similar observations
for OPD despite its dense teacher supervision. Their subnetwork-retraining
experiments identify coordinates from observed parameter changes. We instead
ask whether Fisher measured before training identifies useful subsets. \citet{rios2025sparsity,adewuyi2026multiple} motivate comparisons with random
masks.

Existing explanations of update sparsity emphasize several mechanisms. \citet{mukherjee2025subnetworks} provide empirical evidence
that sparsity is associated with training on near-policy data.
\citet{zhu2025path} attribute update concentration to pretrained geometry
that favors off-principal updates under a small KL budget, with theoretical
guarantees relying on local curvature and spectral assumptions.
\citet{shenfeld2025razor,miahi2026understanding} show how limited numerical
precision can contribute to update sparsity, while \citet{mukherjee2026we}
 show that optimizer choice strongly affects sparsity.
We use the diagonal model Fisher to bound expected coordinate
gradients across SFT, RL, and OPD, and investigate why Fisher can be small in on-policy training.

Fisher-based parameter selection has also been studied in SFT.
\citet{sung2021fish} select high-Fisher parameters for sparse training,
while \citet{xu2021childtuning} use an empirical Fisher computed from
downstream labels to select trainable parameters. \citet{sharma2024information}
show that Fisher information is concentrated in a small fraction of
language-model parameters and use this structure to guide regularization.
These works use Fisher to guide parameter selection or regularization in
SFT. We instead use Fisher to explain why
learning becomes concentrated, with
fixed-mask training serving as an empirical test of our explanation rather
than as a new parameter-selection method.

%% file: sections/03_fisher_concentrated_learning.tex
\section{Fisher Identifies Where Learning Is Concentrated}
\label{sec:fisher-learning}

\begin{table}[t]
\centering
\small
\caption{Effective signals and squared scales
\(\E_{x,\,y\sim\pi_\theta}[a^2]\) in
Equation~\eqref{eq:generic-fisher-bound}. Here
\(w_\nu(x,y)=\nu(y\mid x)/\pi_\theta(y\mid x)\),
\(\bar r(x)=\E_{\pi_\theta}[r\mid x]\),
\(\bar R_T(x)=\E_{\pi_\theta}[R_T\mid x]\), and
\(A_T(x,y)=\sum_t\bigl(\pi_T(y_t\mid s_t)/p_\theta(y_t\mid s_t)-1\bigr)\).}
\label{tab:fisher-bounds}
\begin{tabularx}{\linewidth}{l X X}
\toprule
Method & Effective signal \(a(x,y)\) & Squared scale \\
\midrule
SFT
&
\(w_\nu(x,y)-1\)
&
\(\E_{x,\,y\sim\pi_\theta}[(w_\nu-1)^2]\)
\\
On-policy RL
&
\(r(x,y)-\bar r(x)\)
&
\(\E_x\Var_{\pi_\theta}(r\mid x)\)
\\
PG-style OPD
&
\(R_T(x,y)-\bar R_T(x)\)
&
\(\E_x\Var_{\pi_\theta}(R_T\mid x)\)
\\
Full-vocabulary OPD
&
\(A_T(x,y)\)
&
\(\E_{x,\,y\sim\pi_\theta}[A_T(x,y)^2]\)
\\
\bottomrule
\end{tabularx}
\end{table}

\subsection{Small Diagonal Fisher Implies Small Gradients}
\label{sec:fisher_bound_theory}

The goal of this section is to make the relationship in
Equation~\eqref{eq:intro-bound} precise. Under the training objectives and update
conventions in Table~\ref{tab:objectives}, each expected coordinate gradient
can be expressed as the model score weighted by an objective-dependent
signal.

Let \(h_i(x,y)=\partial_i\log\pi_\theta(y\mid x)\) denote the score of
parameter \(\theta_i\), so that the diagonal Fisher defined in
Equation~\eqref{eq:intro-fisher} can be written as
\[
F_i
=
\E_{x\sim\mathcal D_x,\,y\sim\pi_\theta(\cdot\mid x)}
[h_i(x,y)^2].
\]
For brevity, we suppress the fixed prompt distribution \(\mathcal D_x\) in
subsequent expectations.

\begin{proposition}[A common Fisher bound]
\label{prop:fisher-gradient-bound}
For each method in Table~\ref{tab:objectives}, its expected update gradient
under the stated differentiation convention can be written as
\begin{equation}
\label{eq:effective-signal}
g_i
=
\E_{x,\,y\sim\pi_\theta}
[a(x,y)h_i(x,y)],
\end{equation}
where the effective signal \(a(x,y)\) depends on the objective and is given
in Table~\ref{tab:fisher-bounds}. Consequently,
\begin{equation}
\label{eq:generic-fisher-bound}
|g_i|
\leq
\sqrt{
\E_{x,\,y\sim\pi_\theta}
[a(x,y)^2]
}
\sqrt{F_i}.\footnote{Similar
Fisher-based gradient bounds are derived by \citet{luo2026rlforgets}.}
\end{equation}
\end{proposition}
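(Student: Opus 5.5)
The plan has two parts. First, for each row of Table~\ref{tab:objectives}, derive the representation \eqref{eq:effective-signal}. Then the bound \eqref{eq:generic-fisher-bound} follows from one application of Cauchy--Schwarz. Two standard facts do the work: the log-derivative identity \(\partial_i \pi_\theta = \pi_\theta h_i\), and the zero-mean property of the score, \(\E_{y\sim\pi_\theta(\cdot\mid x)}[h_i(x,y)]=0\) for each \(x\). The latter follows by differentiating \(\sum_y\pi_\theta(y\mid x)=1\). We also use the autoregressive factorization \(h_i=\sum_t \partial_i\log p_\theta(y_t\mid s_t)\). The same zero-mean argument applied to \(p_\theta(\cdot\mid s_t)\) gives \(\E_{a\sim p_\theta(\cdot\mid s_t)}[\partial_i\log p_\theta(a\mid s_t)]=0\) at every prefix.

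\textbf{On-policy RL and PG-style OPD.} The policy-gradient theorem gives \(g_i=\E_{x,y\sim\pi_\theta}[r\,h_i]\); for PG-style OPD, \(R_T\) is treated as a fixed reward. Subtracting the baseline \(\bar r(x)\) costs nothing, since \(\E[\bar r(x)h_i]=\E_x[\bar r(x)\E_{y}[h_i\mid x]]=0\). This yields \(a=r-\bar r\), and likewise \(a=R_T-\bar R_T\).

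\textbf{SFT.} We have \(g_i=\E_{y\sim\nu}[h_i]\). Rewrite this by importance weighting as \(\E_{y\sim\pi_\theta}[w_\nu h_i]\), assuming \(\nu\ll\pi_\theta\) so the weights are well defined. Subtracting \(\E_{\pi_\theta}[h_i]=0\) gives \(a=w_\nu-1\).

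\textbf{Full-vocabulary OPD.} This case is the main obstacle, since it is the only one where the stated form does not follow from a single identity. The rollout distribution is frozen during differentiation, so
\[
g_i=\E_{y\sim\pi_\theta}\Bigl[\sum_t \sum_a \pi_T(a\mid s_t)\,\partial_i\log p_\theta(a\mid s_t)\Bigr].
\]
At each \(t\), rewrite the inner sum as \(\E_{a\sim p_\theta(\cdot\mid s_t)}[(\pi_T(a\mid s_t)/p_\theta(a\mid s_t)-1)\,\partial_i\log p_\theta(a\mid s_t)]\), using the per-prefix zero-mean property to insert the \(-1\). Then replace \(a\) by the sampled token \(y_t\): conditional on \(s_t\), \(y_t\sim p_\theta(\cdot\mid s_t)\) under the rollout, so by the tower property the sum becomes \(\E_{y\sim\pi_\theta}[\sum_t c_t(y_{\le t})\,\ell_t]\). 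Here \(c_t=\pi_T(y_t\mid s_t)/p_\theta(y_t\mid s_t)-1\) and \(\ell_t=\partial_i\log p_\theta(y_t\mid s_t)\).

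Expanding \(A_T h_i=\sum_{t,t'}c_t\ell_{t'}\) shows the diagonal terms \(t=t'\) are exactly what we need. It remains to show the cross terms \(t\neq t'\) have zero expectation.
\begin{itemize}
\item For \(t'>t\): condition on \(s_{t'}\). Then \(c_t\) is fixed and \(\E[\ell_{t'}\mid s_{t'}]=0\), so the term vanishes.
\item For \(t>t'\): condition on \(s_t\). Then \(\E[c_t\mid s_t]=\sum_a\pi_T(a\mid s_t)-1=0\), while \(\ell_{t'}\) is fixed, so this term also vanishes.
\end{itemize}
Hence \(g_i=\E[A_T h_i]\). This step also needs care with possibly variable response lengths (handled by treating EOS as a token) and with \(\pi_T\) being supported where \(p_\theta>0\).

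\textbf{The bound.} With \eqref{eq:effective-signal} established in every case, apply Cauchy--Schwarz in \(L^2\) of the joint law of \((x,y)\):
\[
|g_i|\le\sqrt{\E[a^2]}\sqrt{\E[h_i^2]}=\sqrt{\E[a^2]}\sqrt{F_i}.
\]
The squared scales in Table~\ref{tab:fisher-bounds} follow directly. For RL and PG-style OPD, \(\E[(r-\bar r)^2]=\E_x\Var_{\pi_\theta}(r\mid x)\), and analogously for \(R_T\).
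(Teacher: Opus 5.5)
Your proposal is correct and follows the paper's proof essentially step for step. It uses importance weighting plus the zero-mean score for SFT, and the score-function identity with a prompt-dependent baseline for RL and PG-style OPD. For full-vocabulary OPD it uses the same martingale-difference argument: $\E[c_t\ell_t\mid s_t]$ recovers $\sum_a \pi_T(a\mid s_t)\,\partial_i\log p_\theta(a\mid s_t)$, and cross terms vanish by conditioning on the later prefix. A single Cauchy--Schwarz step then gives the bound, and your side conditions (padding after EOS, $\pi_T\ll p_\theta$, $\nu\ll\pi_\theta$) mirror the paper's assumptions.
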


\begin{proof}[Proof sketch]
The model score satisfies
\begin{equation}
\label{eq:score-zero}
\E_{y\sim\pi_\theta(\cdot\mid x)}
[h_i(x,y)]
=
\sum_y
\pi_\theta(y\mid x)
\partial_i\log\pi_\theta(y\mid x)
=
0.
\end{equation}
Using this identity together with the method-specific gradient expressions
gives Equation~\eqref{eq:effective-signal}, with derivations provided in
Appendix~\ref{app:fisher-derivations}. Applying Cauchy--Schwarz to this
representation yields Equation~\eqref{eq:generic-fisher-bound}.
\end{proof}

\begin{interpretation}
\textbf{Interpretation.}
All four methods admit the same dependence on \(\sqrt{F_i}\) in the
resulting upper bound. The remaining factor depends on the learning signal
or distribution mismatch.
When this factor is controlled, small diagonal Fisher is a
\textit{sufficient} condition for a small expected gradient for a
\textit{single step}.
\end{interpretation}

Learning over an
entire training trajectory also depends on the optimization dynamics, and
the sparse subset that supports learning need not be unique.
We next test whether this relationship is reflected in where gradients and
effective learning are concentrated in practice. In Section~\ref{sec:test1},
we ask whether Fisher identifies the coordinates on which gradients are
concentrated at a given policy. In Section~\ref{sec:test2}, we move beyond
this local relationship and ask whether a sparse subset identified by Fisher
can sustain learning over an entire training trajectory.

\begin{figure}[t]
\centering
\includegraphics[width=\linewidth]{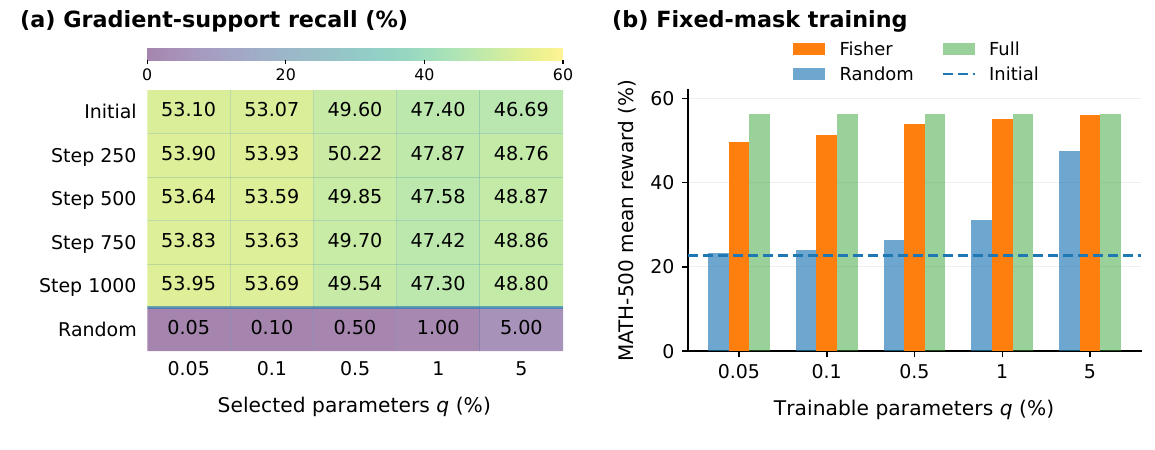}
\vspace{-9mm}
\caption{(a) Gradient-support recall (\%) across selection percentages \(q\).
Initial denotes the pretrained policy. Checkpoint rows average independent
full-parameter training runs. Random is the analytic baseline.
(b) Final MATH-500 mean reward across fixed-mask budgets. Fisher and Random
denote fixed Fisher-selected and uniformly random masks, respectively, while
Full updates all parameters. The dashed line marks the initial policy's score
(\(22.65\%\)).}

\label{fig:fisher-experiments}
\end{figure}

\subsection{Fisher Identifies Where Gradients Are Concentrated}
\label{sec:test1}

The bound in Section~\ref{sec:fisher_bound_theory} implies that, when the objective-dependent scale is controlled, large expected gradients require sufficiently large Fisher. We therefore test whether the largest-gradient coordinates are captured by the largest-Fisher coordinates. For a fixed policy, we rank parameter coordinates separately by Fisher and by
absolute estimated gradient. At a given selection percentage, we then measure
how many of the largest-gradient coordinates are contained in the
highest-Fisher set, using an equally sized random subset as a baseline.

We train Qwen3-1.7B-Base \citep{yang2025qwen3} with Dr.\ GRPO
\citep{liu2025r1zero} for MATH-500 \citep{lightman2023lets}.
Experimental details are given in Appendix~\ref{app:experiment-settings}.

Using the sequence score \(h_i(x,y)\) defined in
Section~\ref{sec:fisher_bound_theory}, we estimate
\begin{equation}
\label{eq:test1-estimators}
\widehat F_i
=
\frac{1}{N_F}
\sum_{n=1}^{N_F}
h_i(x_n,y_n)^2,
\qquad
\widehat g_i
=
\frac{1}{N_gG}
\sum_{n=1}^{N_g}
\sum_{j=1}^{G}
(r_{nj}-\bar r_n)
h_i(x_n,y_{nj}),
\end{equation}
where \(N_F\) and \(N_g\) are the numbers of Fisher and gradient prompts,
\(G\) is the number of responses per gradient prompt, and
\(\bar r_n=G^{-1}\sum_{j=1}^{G}r_{nj}\).

For \(P\) parameter coordinates and a selection percentage \(q\), let
\(K_q=\left\lfloor qP/100\right\rfloor\), and let \(S_F(q)\) and \(S_g(q)\)
denote the globally largest \(K_q\) coordinates of \(\widehat F\) and
\(|\widehat g|\), respectively. We define gradient-support recall by
\begin{equation}
\label{eq:test1-recall}
\operatorname{Recall}(q)
=
100
\frac{|S_F(q)\cap S_g(q)|}{K_q}.
\end{equation}
A uniformly random subset of the same size has expected recall approximately
\(q\%\). As shown in Figure~\ref{fig:fisher-experiments}(a), high-Fisher coordinates are strongly enriched for large-gradient coordinates
at initialization and throughout training across all evaluated cutoffs, which is consistent with our theory. Detailed results and corresponding experiments with OPD are
reported in Appendix~\ref{app:experiment-results}.

\begin{takeaway}
\textbf{Takeaway.}
High-Fisher coordinates capture where the largest gradients are concentrated.
\end{takeaway}

\subsection{Initial Fisher Masks Sustain Learning}
\label{sec:test2}

Gradient-support recall tests the Fisher relationship locally at a fixed
policy, but does not establish whether the identified coordinates remain useful
for learning as the policy changes. In this section, for each parameter budget \(q\), we select the
globally highest-Fisher \(q\%\) of parameters at initialization and keep this
trainable set fixed throughout training. We compare these Fisher-selected
masks with two uniformly random masks of the same size and with full-parameter
training. We use four training seeds for each condition. Experimental details
are given in Appendix~\ref{app:experiment-settings}.

Figure~\ref{fig:fisher-experiments}(b) compares final MATH-500 performance across parameter budgets. The Fisher-selected masks substantially outperform
equally sized random masks across all budgets, while recovering \(85.21\%\) of
the full-training improvement with only \(0.1\%\) of parameters trainable and
\(98.82\%\) at \(q=5\%\).
 Together with the gradient-support results above, these results provide evidence 
that Fisher is informative not only about where gradients are concentrated at
a given policy, but also about which sparse parameter subsets can support
learning over an entire training trajectory. Detailed results and corresponding experiments with OPD are
reported in Appendix~\ref{app:experiment-results}.
\begin{takeaway}
\textbf{Takeaway.}
Initial Fisher identifies sparse parameter subsets that can sustain effective learning throughout training.
\end{takeaway}

%% file: sections/04_low_fisher_structure.tex
\section{Why Does Low-Fisher Structure Arise, and Does It Persist?}
\label{sec:fisher-structure}

The preceding results show that Fisher identifies where learning can be
concentrated. Although on-policy RL and OPD use different learning signals,
both train on responses generated by the current model. This shared sampling
scheme motivates examining the Fisher along the model's own generations. We
therefore ask why Fisher is small under on-policy generation and whether
the resulting low-Fisher structure persists as the policy evolves.

\subsection{Why Is the Fisher Small in On-Policy Training?}
\label{sec:why-small-fisher}

On-policy training uses responses sampled from the model's own distribution.
By definition, a parameter has small Fisher when the log probabilities of
these model-generated responses are less sensitive to changes in that
parameter. Since a response probability is determined by the next-token
probabilities along its generation, we can ask what makes these probabilities
less sensitive. Note that shifting all logits by the same amount leaves the
next-token distribution unchanged. More generally, a parameter has little
effect on the distribution when its perturbation changes the logits of likely tokens
similarly. This suggests a potential explanation for small Fisher: \emph{for many parameters, the logits of
high-probability next tokens may respond similarly to a small change in that
parameter.}

One intuition for this alignment is that likely continuations of
a model-generated prefix may share grammatical, topical, or formatting
requirements. Many parameters may capture these shared requirements and affect several
plausible candidates similarly, while other parameters distinguish between
them \citep{geva2022transformer,gurnee2024universal}. As an illustration, consider the prefix ``She drank a cup of'', for which
``tea'' and ``coffee'' are plausible continuations. A hypothetical parameter direction that increases the model's preference for
liquids could affect the logits of both tokens similarly. Such a direction
would have little effect on the model's relative preference between these
likely continuations. 
We first make this intuition precise by deriving a token-level expression
for Fisher and then empirically test the role of the model's probability assignment.

\paragraph{Token-level Fisher.}
To connect the intuition above to the model Fisher, we relate the Fisher of
an entire response to token-level contributions and express these
contributions through differences in logit sensitivities. \footnote{Related Fisher decompositions for conditional and sequential models are standard in natural-gradient and trajectory-Fisher analyses \citep{martens2020natural,peters2005natural,raykov2026information}.}

For a response \(y=(y_1,\ldots,y_T)\), let \(s_t=(x,y_{<t})\).
Define the token score \(\xi_{t,i}\) for parameter \(\theta_i\)
and the corresponding sequence score \(h_i(x,y)\) as
\[
\xi_{t,i}
=
\partial_i\log p_\theta(y_t\mid s_t),
\qquad
h_i(x,y)
=
\sum_t \xi_{t,i}.
\]
Here, \(p_\theta(\cdot\mid s_t)\) denotes the next-token distribution
at prefix \(s_t\). The \textit{token Fisher} at prefix \(s\) is
\[
f_i(s)
=
\E_{a\sim p_\theta(\cdot\mid s)}
\left[
\left(\partial_i\log p_\theta(a\mid s)\right)^2
\right]
=
\E[\xi_{t,i}^2\mid s_t=s].
\]
Let \(z_a(s)\) denote the logit of token \(a\) at prefix \(s\), write
\(p=p_\theta(\cdot\mid s)\) and
\(p_a=p_\theta(a\mid s)=e^{z_a(s)}/\sum_b e^{z_b(s)}\), and define the
logit sensitivity to \(\theta_i\) as
\(j_{i,a}(s)=\partial_i z_a(s)\).

\begin{proposition}[Token-level decomposition of model Fisher]
\label{prop:token-fisher}
With expectations over the fixed prompt distribution and responses sampled
from \(\pi_\theta(\cdot\mid x)\), the diagonal model Fisher satisfies
\begin{equation}
\label{eq:token-fisher-decomposition}
F_i(\theta)
=
\E\!\left[\sum_t f_i(s_t)\right].
\end{equation}
At each prefix \(s\), the token Fisher can be expressed as
\begin{equation}
\label{eq:token-fisher-pairwise}
f_i(s)
=
\Var_{a\sim p}\!\left[j_{i,a}(s)\right]
=
\frac12
\E_{a,b\overset{\mathrm{iid}}{\sim}p}
\left[
\left(j_{i,a}(s)-j_{i,b}(s)\right)^2
\right].
\end{equation}
\end{proposition}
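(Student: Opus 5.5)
The proof has two independent parts: a martingale-type decomposition of the sequence score, which gives \eqref{eq:token-fisher-decomposition}, and a softmax computation, which gives \eqref{eq:token-fisher-pairwise}.

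\textbf{Decomposition.} By the chain rule for autoregressive generation, \(\log\pi_\theta(y\mid x)=\sum_t\log p_\theta(y_t\mid s_t)\), so \(h_i(x,y)=\sum_t\xi_{t,i}\). Squaring and taking expectations under \(x\sim\mathcal D_x\), \(y\sim\pi_\theta(\cdot\mid x)\) gives
\[
F_i=\E\Big[\sum_t\xi_{t,i}^2\Big]+2\,\E\Big[\sum_{t<u}\xi_{t,i}\,\xi_{u,i}\Big].
\]
For \(t<u\), the factor \(\xi_{t,i}\) is a function of \(s_u\). Conditioning on \(s_u\) and applying the token-level analogue of \eqref{eq:score-zero}, namely \(\E[\xi_{u,i}\mid s_u]=\sum_a p_\theta(a\mid s_u)\,\partial_i\log p_\theta(a\mid s_u)=\partial_i\sum_a p_\theta(a\mid s_u)=0\), shows every cross term vanishes. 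The tower property then gives \(\E[\xi_{t,i}^2]=\E[\E[\xi_{t,i}^2\mid s_t]]=\E[f_i(s_t)]\), which yields \eqref{eq:token-fisher-decomposition}.

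The main obstacle is that the response length is random, so the sum over \(t\) has a random number of terms. I would handle this by padding every response to a fixed maximum length \(T_{\max}\) with an absorbing end-of-sequence state. After EOS the next-token distribution is deterministic, so \(\xi_{t,i}=0\) and \(f_i(s_t)=0\) there. This makes the sum finite, lets expectation and summation be exchanged, and keeps the cross-term argument valid. If lengths are genuinely unbounded, one would instead need a square-integrability condition and dominated convergence; I would state this as an assumption.

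\textbf{Pairwise form.} Write \(z_a\) for \(z_a(s)\) and \(j_{i,a}\) for \(j_{i,a}(s)\). Differentiating the log-softmax,
\[
\partial_i\log p_a=j_{i,a}-\sum_b p_b\, j_{i,b}=j_{i,a}-\E_{b\sim p}[j_{i,b}].
\]
Hence \(f_i(s)=\E_{a\sim p}\big[(j_{i,a}-\E_p j_i)^2\big]=\Var_{a\sim p}[j_{i,a}]\). For the second equality I would use the standard identity that \(\E[(X-Y)^2]=2\Var(X)\) when \(X,Y\) are i.i.d. This follows by expanding the square: \(\E[X^2]+\E[Y^2]-2\,\E[X]\,\E[Y]=2\,\E[X^2]-2\,\E[X]^2\).
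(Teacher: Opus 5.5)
Your proposal is correct and follows essentially the same route as the paper. In both, the cross terms \(\E[\xi_{u,i}\xi_{t,i}]\) vanish by conditioning on the later prefix and using the zero conditional mean of the token score, the diagonal terms reduce to \(f_i(s_t)\) by the tower property, and the pairwise form follows from the log-softmax derivative together with the i.i.d.\ identity \(\E[(X-Y)^2]=2\Var(X)\); your treatment of variable length (zero scores after EOS, fixed maximum horizon) also matches the paper's stated convention.
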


\begin{proof}[Proof sketch]
Under on-policy generation, each next token is sampled from
\(p_\theta(\cdot\mid s_t)\), so
\begin{equation}
\label{eq:token-score-zero}
\E[\xi_{t,i}\mid s_t]
=
\sum_a p_\theta(a\mid s_t)
\partial_i\log p_\theta(a\mid s_t)
=
0.
\end{equation}
Thus, the token scores form a martingale difference sequence with respect
to the generation history. For \(u<t\), the earlier score \(\xi_{u,i}\)
is determined by \(s_t\), and the tower property gives
\[
\E[\xi_{u,i}\xi_{t,i}]
=
\E\!\left[\xi_{u,i}\E[\xi_{t,i}\mid s_t]\right]
=
0.
\]
The cross-token terms therefore vanish in expectation, yielding
\[
F_i(\theta)
=
\E\!\left[\left(\sum_t\xi_{t,i}\right)^2\right]
=
\E\!\left[
\sum_t\xi_{t,i}^2
+
2\sum_{u<t}\xi_{u,i}\xi_{t,i}
\right]
=
\E\!\left[\sum_t f_i(s_t)\right].
\]
Derivation of Equation~\eqref{eq:token-fisher-pairwise} is
provided in Appendix~\ref{app:token-fisher-pairwise}.
\end{proof}

\begin{interpretation}
\textbf{Interpretation.}
Equation~\eqref{eq:token-fisher-decomposition} shows that the diagonal model Fisher is determined by the token-level contributions
accumulated along the model's own responses.
Equation~\eqref{eq:token-fisher-pairwise} formalizes the intuition above:
token Fisher depends on how differently a parameter
changes the logits of possible next tokens.
It can therefore be small
when the model assigns most probability to tokens with similar sensitivities.
\end{interpretation}

\paragraph{Testing the alignment between probability mass and sensitivities.}
To empirically examine our explanation, we compare the same pairwise quantity under
the model's probabilities and a uniform distribution over next tokens,
while keeping the model and prefix fixed.
 Formally, let \(u\)
be the uniform distribution over the vocabulary of size \(V\), with
\(u_a=1/V\). We define the uniform reference as
\begin{equation}
\label{eq:uniform-token-fisher}
f_i^{\mathrm{unif}}(s)
=
\frac12
\E_{a,b\overset{\mathrm{iid}}{\sim}u}
\left[
\left(j_{i,a}(s)-j_{i,b}(s)\right)^2
\right].
\end{equation}

Observing \(f_i(s)<f_i^{\mathrm{unif}}(s)\) would be consistent with the hypothesis that high-probability tokens tend to have more similar logit sensitivities to \(\theta_i\) than tokens drawn uniformly from the vocabulary.

We perform this comparison at initialization and throughout training, both
over all parameters and within current low-Fisher groups. For each group,
we report its average token Fisher under the model's probabilities and the
corresponding uniform reference, together with their ratio. 
Experimental details are provided in Appendix~\ref{app:experiment-settings}.

\begin{table}[t]
\centering
\small
\setlength{\tabcolsep}{3.5pt}
\caption{Average token Fisher under the model's probabilities (Real), the uniform
reference (Uniform), and their ratio. Fisher values are in units of \(10^{-6}\).
Low-Fisher groups are selected from each checkpoint.
Checkpoint values average four training runs. The initial policy is shared.
}

\label{tab:alignment}
\begin{tabular}{lccccccc}
\toprule
& & & \multicolumn{5}{c}{Current low-Fisher group (\%)} \\
\cmidrule(lr){4-8}
Checkpoint & Metric & All & 99.95 & 99.9 & 99.5 & 99 & 95 \\
\midrule
Initial
& Real    & 22.19 & 4.48 & 3.98 & 3.06 & 2.72 & 1.87 \\
& Uniform & 141.21 & 41.81 & 39.36 & 34.43 & 32.48 & 27.87 \\
& Ratio   & \textbf{0.16} & \textbf{0.11} & \textbf{0.10} & \textbf{0.09} & \textbf{0.08} & \textbf{0.07} \\
\midrule
Step 250
& Real    & 11.57 & 3.00 & 2.71 & 2.13 & 1.90 & 1.32 \\
& Uniform & 184.21 & 46.77 & 43.07 & 35.72 & 32.89 & 26.01 \\
& Ratio   & \textbf{0.06} & \textbf{0.06} & \textbf{0.06} & \textbf{0.06} & \textbf{0.06} & \textbf{0.05} \\
\midrule
Step 500
& Real    & 10.99 & 2.91 & 2.63 & 2.05 & 1.82 & 1.22 \\
& Uniform & 185.50 & 45.87 & 42.36 & 35.35 & 32.59 & 25.86 \\
& Ratio   & \textbf{0.06} & \textbf{0.06} & \textbf{0.06} & \textbf{0.06} & \textbf{0.06} & \textbf{0.05} \\
\midrule
Step 750
& Real    & 10.09 & 2.69 & 2.43 & 1.89 & 1.68 & 1.11 \\
& Uniform & 200.53 & 48.69 & 44.93 & 37.41 & 34.43 & 27.26 \\
& Ratio   & \textbf{0.05} & \textbf{0.06} & \textbf{0.05} & \textbf{0.05} & \textbf{0.05} & \textbf{0.04} \\
\midrule
Step 1000
& Real    & 10.71 & 2.86 & 2.57 & 2.00 & 1.77 & 1.17 \\
& Uniform & 242.76 & 63.16 & 58.05 & 47.88 & 43.89 & 34.20 \\
& Ratio   & \textbf{0.04} & \textbf{0.05} & \textbf{0.04} & \textbf{0.04} & \textbf{0.04} & \textbf{0.03} \\
\bottomrule
\end{tabular}
\end{table}

As shown in Table~\ref{tab:alignment}, the Real-to-Uniform ratios are
consistently below one at initialization and throughout training, both across
all parameters and within the low-Fisher groups. This indicates that, along
the model's own prefixes, high-probability next tokens tend to have more
similar parameter sensitivities, reducing the token-level Fisher and hence the sequence Fisher accumulated
along its generations.

\begin{takeaway}
\textbf{Takeaway.}
In on-policy training, high-probability tokens tend to have more similar sensitivities, which leads to lower Fisher.
\end{takeaway}

\subsection{Stationarity of the Diagonal Fisher}
\label{sec:stationarity}

In this section, we examine whether the initial Fisher pattern persists as
the policy changes during training. This is relevant beyond a single step
because the bounds in Section~\ref{sec:fisher_bound_theory} depend on the
Fisher of the current policy. It is also important for interpreting the
fixed-mask experiments in Section~\ref{sec:test2}, where trainable
coordinates are selected only once at initialization.

We examine whether coordinates remain in their initial low- or high-Fisher
sets and whether initially low-Fisher coordinates remain low in magnitude. Using the Fisher
estimates from Section~\ref{sec:test1}, we compare the initial policy with
later checkpoints of full-parameter training. For each selection percentage \(q\), let
\(K_q=\lfloor qP/100\rfloor\), where \(P\) is the number of parameters.
At step \(k\), let \(H_k(q)\) contain the \(K_q\)
coordinates with the highest estimated Fisher, and let \(L_k(q)\) contain
the remaining coordinates.

We measure how much of each initial set remains in the corresponding set at
step \(k\):
\begin{equation}
\label{eq:fisher-retention}
\operatorname{Retention}^{\mathrm{low}}_k(q)
=
100\,\frac{|L_0(q)\cap L_k(q)|}{P-K_q},
\qquad
\operatorname{Retention}^{\mathrm{high}}_k(q)
=
100\,\frac{|H_0(q)\cap H_k(q)|}{K_q}.
\end{equation}

To measure whether initially low-Fisher coordinates remain low in magnitude, we keep the
initial low-Fisher set \(L_0(q)\) fixed and evaluate its average Fisher at
each checkpoint:
\begin{equation}
\label{eq:fixed-low-fisher-mean}
M_k(q)
=
\frac{1}{|L_0(q)|}
\sum_{i\in L_0(q)}
\widehat F_{k,i}.
\end{equation}
We report \(M_k(q)/M_0(q)\), so that a value near one indicates that the
average Fisher of the initially low-Fisher coordinates remains close to its
initial scale. For comparison, we also report the corresponding ratio over all parameters.

\begin{table}[t]
\centering
\small
\setlength{\tabcolsep}{2pt}
\caption{Persistence of the initial Fisher pattern during full-parameter
training. Retention reports low/high-Fisher set overlap (\%), and
\(M_k/M_0\) reports the Fisher magnitude of the fixed initial low-Fisher set
relative to initialization. Values are averaged over four runs. Random gives
the analytic retention baseline.}
\label{tab:fisher-retention}
\begin{tabular}{@{}lccccccc@{}}
\toprule
& & \multicolumn{6}{c}{Selection \(q\) (\%)} \\
\cmidrule(lr){3-8}
Checkpoint & Metric & All & 0.05 & 0.1 & 0.5 & 1 & 5 \\
\midrule
Step 250
& Retention & -- & 99.99 / 75.76 & 99.98 / 76.96 & 99.89 / 77.67
& 99.76 / 76.70 & 98.78 / 76.89 \\
& \(M_k/M_0\) & 0.70 & 0.92 & 0.92 & 0.93 & 0.93 & 0.92 \\
\midrule
Step 500
& Retention & -- & 99.99 / 73.02 & 99.97 / 74.36 & 99.88 / 75.30
& 99.74 / 74.37 & 98.66 / 74.62 \\
& \(M_k/M_0\) & 0.73 & 0.99 & 1.00 & 1.00 & 1.01 & 0.99 \\
\midrule
Step 750
& Retention & -- & 99.99 / 72.02 & 99.97 / 73.48 & 99.87 / 74.61
& 99.73 / 73.68 & 98.63 / 73.96 \\
& \(M_k/M_0\) & 0.76 & 1.04 & 1.06 & 1.07 & 1.07 & 1.06 \\
\midrule
Step 1000
& Retention & -- & 99.99 / 71.21 & 99.97 / 72.85 & 99.87 / 74.09
& 99.73 / 73.22 & 98.61 / 73.61 \\
& \(M_k/M_0\) & 0.83 & 1.15 & 1.16 & 1.17 & 1.17 & 1.16 \\
\midrule
Random
& Retention & -- & 99.95 / 0.05 & 99.90 / 0.10 & 99.50 / 0.50
& 99.00 / 1.00 & 95.00 / 5.00 \\
\bottomrule
\end{tabular}
\end{table}

As shown in Table~\ref{tab:fisher-retention}, the initial high-Fisher
sets retain roughly \(70\%\)--\(78\%\) of their coordinates at later
checkpoints, far above the corresponding random baselines. The average Fisher of the fixed initial low-Fisher sets remains close to
its initial value through the middle of training and increases by only
about \(15\%\)--\(17\%\) by the final checkpoint.

\begin{takeaway}
\textbf{Takeaway.}
The initial diagonal Fisher pattern remains largely
preserved throughout training in our setting.

\end{takeaway}

%% file: sections/05_discussion.tex
\section{Limitations and Future Work}
\label{sec:discussion}

Our bounds concern expected gradients at a fixed policy rather than
accumulated parameter changes. Extending the analysis to the latter requires
accounting for gradient noise, optimizer dynamics, and learning-rate
schedules. Although our
token-level analysis explains how the model's probability weighting can
reduce Fisher, it does not fully explain how architecture and pretraining
shape the locations of low-Fisher coordinates.

Our masked-training experiments use dense computation and do not establish
memory or runtime savings. Realizing such benefits requires implementations
that exploit the selected masks. Beyond parameter selection, future work
could examine Fisher as a training diagnostic. Extending the study to
multiple tasks and longer training trajectories would also help determine
whether Fisher-selected subsets can support new learning while preserving
previously acquired capabilities.

%% file: sections/06_conclusion.tex
\section{Conclusion}
\label{sec:conclusion}

We studied the concentration of learning during language-model post-training
through the diagonal model Fisher. Across SFT, RL, and OPD, our analysis shows
that small Fisher constrains expected coordinate gradients when the
objective-specific signal or distribution-mismatch scale is controlled.
Empirically, Fisher-selected coordinates are enriched for large gradients,
and fixed masks selected at initialization retain a large proportion of the improvement
from full training in RL and OPD.

Our token-level analysis expresses the model Fisher as the expected sum of
probability-weighted variations in logit sensitivities. This explains how
the model's own probability assignment can contribute to low Fisher when
likely next tokens respond similarly to parameter changes. The observed
persistence of Fisher-selected sets further indicates that the initial
selection remains informative as training progresses. Together, these
findings trace the path illustrated in Figure~\ref{fig:fisher-chain}, from on-policy generation to low Fisher and small
expected gradients for many parameters, helping explain why effective
learning can be concentrated in a sparse parameter subset.

\subsection*{AI use statement}
We used generative AI tools to aid or polish writing, for retrieval and discovery, for research ideation or execution, and to draft sections of the paper.  We have reviewed all AI-assisted work. We take responsibility for the final content of this work with the aid of generative AI.

%% file: sections/07_acknowledgments.tex
\section*{Acknowledgments}

We thank members of the Alta group at the University of Illinois Urbana-Champaign for their helpful feedback, and Yuemin Yu for providing additional compute support.
This work was supported by NSF Grant No. CHE2505932, an Amazon AICE award, and a Capital One ASKS award.
This research also used the Delta advanced computing and data resources, which are supported by the National Science Foundation (award OAC 2005572) and the State of Illinois. Delta is a joint effort of the University of Illinois Urbana-Champaign and its National Center for Supercomputing Applications.
This research used the DeltaAI advanced computing and data resource, which is supported by the National Science Foundation (award OAC 2320345) and the State of Illinois. DeltaAI is a joint effort of the University of Illinois Urbana-Champaign and its National Center for Supercomputing Applications.

%% file: sections/appendix_derivations.tex
\section{Mathematical Derivations}
\label{app:derivations}

\subsection{A Common Fisher Bound}
\label{app:fisher-derivations}

We prove Proposition~\ref{prop:fisher-gradient-bound} under the
differentiation conventions in Table~\ref{tab:objectives}. Throughout,
the prompt distribution is fixed independently of \(\theta\), and
\(\E_\pi\) denotes expectation over prompts and responses
\(y\sim\pi_\theta(\cdot\mid x)\). We assume differentiable probabilities,
that differentiation and expectation can be interchanged, and finite
second moments. For variable-length responses, generation stops at EOS
or a fixed maximum horizon; token scores after termination are zero.

Write \(h_i(x,y)=\partial_i\log\pi_\theta(y\mid x)\). Normalization gives
\begin{equation}
\label{eq:app-score-zero}
\E_\pi[h_i\mid x]
=\sum_y\pi_\theta(y\mid x)\partial_i\log\pi_\theta(y\mid x)
=\partial_i\sum_y\pi_\theta(y\mid x)=0.
\end{equation}
Thus any response-independent baseline \(b(x)\) satisfies
\(\E_\pi[b(x)h_i]=0\). Once \(g_i=\E_\pi[a h_i]\) is established,
Cauchy--Schwarz yields
\begin{equation}
\label{eq:app-common-bound}
|g_i|
\leq\sqrt{\E_\pi[a^2]}\sqrt{\E_\pi[h_i^2]}
=\sqrt{\E_\pi[a^2]}\sqrt{F_i}.
\end{equation}
It remains to identify \(a\) for the four objectives.

\paragraph{Supervised fine-tuning.}
Let the fixed demonstration distribution \(\nu(\cdot\mid x)\) be
absolutely continuous with respect to \(\pi_\theta(\cdot\mid x)\).
With \(w_\nu(x,y)=\nu(y\mid x)/\pi_\theta(y\mid x)\),
\begin{align*}
g_i
&=\partial_i\E_{x,y\sim\nu}[\log\pi_\theta(y\mid x)]
=\E_{x,y\sim\nu}[h_i]\\
&=\E_\pi[w_\nu h_i]
=\E_\pi[(w_\nu-1)h_i].
\end{align*}
The last equality follows from Equation~\eqref{eq:app-score-zero}.
Hence \(a=w_\nu-1\), and its squared scale is
\[
\E_\pi[a^2]
=\E_x\!\left[\chi^2\!\left(\nu(\cdot\mid x)\,\|\,
                               \pi_\theta(\cdot\mid x)\right)\right].
\]

\paragraph{On-policy reinforcement learning.}
For a reward \(r(x,y)\) with no direct parameter dependence,
the score-function identity gives
\(g_i=\partial_i\E_\pi[r]=\E_\pi[rh_i]\).
Subtracting \(\bar r(x)=\E_\pi[r\mid x]\) leaves this expectation unchanged:
\[
g_i=\E_\pi[(r-\bar r(x))h_i],
\qquad
\E_\pi[(r-\bar r(x))^2]=\E_x\Var_\pi(r\mid x).
\]
This establishes the RL row of Table~\ref{tab:fisher-bounds}.

\paragraph{PG-style on-policy distillation.}
The sequence signal is
\[
R_T(x,y)=\sum_t
\log\frac{\pi_T(y_t\mid s_t)}{p_\theta(y_t\mid s_t)}.
\]
The teacher is fixed and assigns positive probability to student-sampled
tokens. Under the stated convention of holding this signal fixed during
the local score-function update, \(g_i=\E_\pi[R_T h_i]\).
Letting \(\bar R_T(x)=\E_\pi[R_T\mid x]\), we obtain
\[
g_i=\E_\pi[(R_T-\bar R_T(x))h_i],
\qquad
\E_\pi[(R_T-\bar R_T(x))^2]=\E_x\Var_\pi(R_T\mid x).
\]

\paragraph{Full-vocabulary on-policy distillation.}
At a sampled prefix \(s_t\), write
\(p_t(a)=p_\theta(a\mid s_t)\) and \(q_t(a)=\pi_T(a\mid s_t)\),
with \(q_t\ll p_t\). Differentiating negative teacher-to-student KL
while holding the sampled prefixes and teacher fixed gives
\begin{equation}
\label{eq:app-full-opd-direct}
g_i=\E_\pi\!\left[\sum_t\sum_a q_t(a)
                              \partial_i\log p_t(a)\right].
\end{equation}
Define
\[
b_t=\frac{q_t(y_t)}{p_t(y_t)}-1,\qquad
A_T=\sum_t b_t,\qquad
\xi_{t,i}=\partial_i\log p_t(y_t),\qquad
h_i=\sum_t\xi_{t,i}.
\]
Both \(b_t\) and \(\xi_{t,i}\) have zero conditional mean given the
history before token \(t\):
\[
\E[b_t\mid s_t]=\sum_a q_t(a)-1=0,
\qquad \E[\xi_{t,i}\mid s_t]=0.
\]
For \(t\ne u\), conditioning on the history of the later token
therefore gives \(\E[b_t\xi_{u,i}]=0\). For the same-token terms,
\[
\E[b_t\xi_{t,i}\mid s_t]
=\sum_a(q_t(a)-p_t(a))\partial_i\log p_t(a)
=\sum_a q_t(a)\partial_i\log p_t(a).
\]
Consequently,
\[
\E_\pi[A_T h_i]
=\E_\pi\!\left[\sum_t b_t\xi_{t,i}\right]=g_i.
\]
This proves the required representation with \(a=A_T\).
The same conditional-mean argument eliminates cross terms in \(A_T^2\):
\[
\E_\pi[A_T^2]
=\E_\pi\!\left[\sum_t
          \chi^2\!\left(q_t\,\|\,p_t\right)\right].
\]
Applying Equation~\eqref{eq:app-common-bound} to each representation
completes the proof of Proposition~\ref{prop:fisher-gradient-bound}.

\subsection{Token-Level Decomposition of Model Fisher}
\label{app:token-fisher-pairwise}

We prove both identities in Proposition~\ref{prop:token-fisher}.
Let \(s_t=(x,y_{<t})\),
\(\xi_{t,i}=\partial_i\log p_\theta(y_t\mid s_t)\), and
\(h_i=\sum_t\xi_{t,i}\).

\paragraph{From sequence Fisher to token Fisher.}
At an active position, on-policy sampling and normalization imply
\[
\E[\xi_{t,i}\mid s_t]
=\sum_a p_\theta(a\mid s_t)\partial_i\log p_\theta(a\mid s_t)=0.
\]
For \(u<t\), the earlier score \(\xi_{u,i}\) is determined by the
history at \(t\). The tower property thus gives
\[
\E[\xi_{u,i}\xi_{t,i}]
=\E[\xi_{u,i}\E[\xi_{t,i}\mid s_t]]=0.
\]
The identity remains valid when scores are zero after termination.
Expanding the square of the sequence score yields
\begin{align*}
F_i
&=\E_\pi\!\left[\left(\sum_t\xi_{t,i}\right)^2\right]\\
&=\E_\pi\!\left[\sum_t\xi_{t,i}^2\right]
  +2\sum_{u<t}\E_\pi[\xi_{u,i}\xi_{t,i}]\\
&=\E_\pi\!\left[\sum_t\E[\xi_{t,i}^2\mid s_t]\right]
=\E_\pi\!\left[\sum_t f_i(s_t)\right].
\end{align*}
The cancellation holds in expectation, not for each sampled response.

\paragraph{Token Fisher as pairwise sensitivity variation.}
Fix a prefix \(s\), write \(p_a=p_\theta(a\mid s)\), and let
\(j_{i,a}=\partial_i z_a(s)\). Differentiating the log softmax gives
\[
\partial_i\log p_a=j_{i,a}-\sum_b p_bj_{i,b}.
\]
With \(\bar j_i=\sum_a p_aj_{i,a}\), this implies
\[
f_i(s)=\sum_a p_a(j_{i,a}-\bar j_i)^2
=\Var_{a\sim p}[j_{i,a}].
\]
For independent \(a,b\sim p\),
\(\E[j_{i,a}j_{i,b}]=\bar j_i^2\). Therefore,
\begin{align*}
\frac12\E_{a,b\overset{\mathrm{iid}}{\sim}p}
             [(j_{i,a}-j_{i,b})^2]
&=\frac12\left(2\E_{a\sim p}[j_{i,a}^2]-2\bar j_i^2\right)\\
&=\Var_{a\sim p}[j_{i,a}]
=f_i(s).
\end{align*}
This proves the second identity and completes
Proposition~\ref{prop:token-fisher}.

%% file: sections/appendix_experiment_settings.tex
\section{Experimental Settings}
\label{app:experiment-settings}

\subsection{Fisher Identifies Where Gradients Are Concentrated}
\label{app:test1-settings}

For Section~\ref{sec:test1}, we measure Qwen3-1.7B-Base at initialization
and at checkpoints from four full-parameter RL training runs. Training
and prompt formatting follow Appendix~\ref{app:test2-settings}.
The Fisher and gradient prompt sets are sampled without replacement from
the training data, are disjoint, and remain fixed across policies.
Each policy generates its own on-policy responses.

\begin{table}[htbp]
\centering
\small
\caption{Test 1 measurement settings.}
\label{tab:app-test1-settings}
\begin{tabularx}{\linewidth}{@{}lX@{}}
\toprule
Setting & Value \\
\midrule
Model & Qwen3-1.7B-Base \\
Policies & Shared Initial; Full training seeds 0--3 at steps
250, 500, 750, and 1000 \\
Prompt source & The 8,513-problem RL training set \\
Fisher sample size & 800 prompts, one response per prompt \\
Gradient sample size & 300 prompts, eight responses per prompt \\
Generation & Temperature 1, top-\(p=1\), no top-\(k\) truncation \\
Maximum prompt / response length & 1,024 / 4,096 tokens \\
Sampling seeds & Prompt selection: 800; Fisher generation: 100;
gradient generation: 200 \\
Selection percentages \(q\) & \(0.05,\ 0.1,\ 0.5,\ 1,\ 5\%\) \\
Precision & FP32 parameters and score accumulation; BF16 computation \\
\bottomrule
\end{tabularx}
\end{table}

\paragraph{Estimators.}
For each response, let
\(h_i(x,y)=\partial_i\sum_t\log p_\theta(y_t\mid x,y_{<t})\),
summing over valid response tokens, including observed stopping tokens.
With \(\bar r_n=8^{-1}\sum_{j=1}^8 r_{nj}\), we compute
\begin{equation}
\label{eq:app-estimators}
\widehat F_i=\frac1{800}\sum_{n=1}^{800}h_i(x_n,y_n)^2,
\qquad
\widehat g_i=\frac1{300\cdot8}
\sum_{n=1}^{300}\sum_{j=1}^{8}
(r_{nj}-\bar r_n)h_i(x_n,y_{nj}).
\end{equation}
Fisher squares each trajectory's score before averaging. The gradient
is averaged with its sign intact, and only the resulting vector is ranked
by absolute value. Neither estimator divides by sampled response length.
The gradient omits the training objective's common fixed-horizon factor,
which does not affect coordinate ranking.

For each \(q\), the highest \(K_q=\lfloor qP/100\rfloor\) coordinates
are selected globally from \(\widehat F\) and \(|\widehat g|\), where
\(P\) counts unique parameter coordinates. Ties use a fixed coordinate
order. Recall is \(100\) times the support intersection divided by
\(K_q\); an equal-size uniform random set has expected recall
\(100K_q/P\%\), approximately \(q\%\).

\subsection{Initial Fisher Masks Sustain Learning}
\label{app:test2-settings}
\label{app:opd-settings}

For Section~\ref{sec:test2}, we compare Full, Fisher, and two random masks
(R0/R1). At each budget, Fisher trains the globally highest
\(K_q\) initial-Fisher coordinates; R0/R1 train exact-size uniform
global subsets with mask seeds 0 and 1. Each mask is fixed throughout
training and shared across training seeds 0--3. Different budgets for a
given random-mask seed use prefixes of the same random permutation.
Full and Initial are shared across budgets within each suite.

RL and OPD use separate initial models and Fisher estimates. Both initial
Fisher estimates use 800 training prompts with one on-policy response
per prompt and the trajectory-score estimator above. Within a suite
and training seed, all conditions share initialization, data-ordering
and sampling rules, optimizer, training budget, and evaluation settings.
Frozen gradients are zeroed after accumulation and before clipping and
the optimizer step.

\begin{table}[htbp]
\centering
\small
\setlength{\tabcolsep}{4pt}
\caption{Training and evaluation settings for the reported Test 2 comparisons.
All conditions within each column share these settings.}
\label{tab:app-test2-settings}
\begin{tabularx}{\linewidth}{@{}lXX@{}}
\toprule
Setting & RL & OPD \\
\midrule
Initial model & Qwen3-1.7B-Base & Qwen3-1.7B (post-trained) \\
Teacher & None & Qwen3-4B-Instruct-2507, frozen \\
Training dataset & SimpleRL-Zoo, Abel levels 3--5 & DAPO-Math-17k \\
Training data size & 8,513 problems & 17,183 problems after deduplication
and removal of exact MATH-500 matches \\
Objective & Dr.\ GRPO & Token-local PG-style distillation \\
Optimizer updates & 1,000 & 20 \\
Prompts per update & 4 & 512 \\
Responses per prompt & 8 & 1 \\
Microbatch size & 4 responses & 2 responses \\
Gradient accumulation & 8 microbatches & 256 microbatches \\
Learning rate & \(5\times10^{-7}\) & \(5\times10^{-6}\) \\
Optimizer & AdamW & AdamW \\
Adam parameters & \(\beta=(0.9,0.999)\), \(\epsilon=10^{-8}\) &
\(\beta=(0.9,0.999)\), \(\epsilon=10^{-8}\) \\
Schedule / weight decay & Constant, no warmup / 0 & Constant, no warmup / 0 \\
Gradient clipping & Global norm 1.0 & Global norm 1.0 \\
Training temperature / top-\(p\) & \(1 / 1\) & \(1 / 1\) \\
Maximum prompt / response tokens & \(1{,}024 / 4{,}096\) &
\(2{,}048 / 8{,}192\) \\
Prompt format & Plain text & Chat, thinking disabled \\
Training seeds & 0, 1, 2, 3 & 0, 1, 2, 3 \\
Reported mask budgets \(q\) & \(0.05,\ 0.1,\ 0.5,\ 1,\ 5\%\) & \(0.5,\ 1,\ 5\%\) \\
\midrule
Evaluation dataset & MATH-500, 500 problems & AIME24, 30 problems \\
Responses per evaluation problem & 4 & 8 \\
Evaluation temperature / top-\(p\) & \(1 / 1\) & \(1 / 0.7\) \\
Evaluation generation seeds & 300--303 & 300--307 \\
Evaluation metrics & Mean response accuracy & Avg@8 and pass@8 \\
\bottomrule
\end{tabularx}
\end{table}

Both suites use FP32 parameters and gradient accumulation, BF16
computation, and no adapters. Each fresh batch receives one optimizer
update. Sampling has no top-\(k\) or min-\(p\) restriction. Evaluation
uses the same prompt format and length limits as training. Observed
stopping tokens are retained; length-truncated responses are not
discarded or given artificial EOS tokens.

\paragraph{RL objective and prompts.}
Rewards are binary mathematical-correctness scores from the shared
answer extraction and \texttt{math-verify} scorer. Dr.\ GRPO subtracts
the prompt group's mean reward without standard-deviation normalization.
The token-summed objective is divided by the response count and the
fixed horizon of 4,096. The clipping parameter is 0.2; there is no
reference-policy KL penalty or entropy bonus. Prompts use
\begin{verbatim}
Question:
{question}
Answer:
Let's think step by step.
\end{verbatim}

\paragraph{OPD objective and prompts.}
The frozen teacher scores tokens sampled from the student. For a batch
with \(N_{\rm tok}\) valid response tokens, the implemented loss is
\begin{equation}
\label{eq:app-opd-loss}
L_{\rm K2}
=\frac1{2N_{\rm tok}}\sum_{n,t\ {\rm valid}}
\left(\log p_\theta(y_{nt}\mid s_{nt})
-\sg[\log\pi_T(y_{nt}\mid s_{nt})]\right)^2.
\end{equation}
Its negative gradient weights each token score by
\(\log\pi_T-\log p_\theta\), using the same whole-batch denominator
for all microbatches. This is a token-local update on fixed sampled
trajectories, rather than the sequence-signal estimator \(R_T h_i\)
in the theoretical PG convention. Answer rewards, group centering,
and PPO clipping are not used in the OPD loss.

Student and teacher receive the same prompt token IDs. The Qwen chat
template is applied with \texttt{enable\_thinking=False} to
{\small
\begin{verbatim}
{question}
Please reason step by step, and put your final answer within \boxed{}.
\end{verbatim}
}
Generation stops at \texttt{<|endoftext|>} or \texttt{<|im\_end|>}.
The OPD evaluation uses AIME I and II from
\texttt{HuggingFaceH4/aime\_2024}.

\paragraph{Evaluation and averaging.}
For \(N\) problems and \(G\) responses per problem, mean response
accuracy is \(100(NG)^{-1}\sum_{n,j}r_{nj}\). Thus RL averages 2,000
binary rewards, and OPD Avg@8 averages 240. OPD pass@8 instead reports
the percentage of questions with at least one correct response among
the eight. All conditions use the same binary scorer.

For each training seed \(s\) and evaluation metric, we compute
\[
M_{R,s}=\frac{M_{R0,s}+M_{R1,s}}2,\qquad
\Delta_s=M_{F,s}-M_{R,s},\qquad
\operatorname{Recovery}_s=
100\,\frac{M_{F,s}-M_0}{M_{\mathrm{Full},s}-M_0}.
\]
Here \(M_0\) is the corresponding suite's shared initial score.
Tables report equal-weight means over the four training seeds.
Recovery is calculated within each seed before averaging, and
\(\Delta\) is in percentage points.

\subsection{Testing the Alignment Between Probability Mass and Sensitivities}
\label{app:alignment-settings}

For the comparison in Section~\ref{sec:why-small-fisher}, Real and
Uniform use the same model and exactly the same prefixes. Only the
distribution used to sample candidate next-token pairs changes.

\begin{table}[htbp]
\centering
\small
\caption{Settings for the Real--Uniform token-Fisher comparison.}
\label{tab:app-alignment-settings}
\begin{tabularx}{\linewidth}{@{}lX@{}}
\toprule
Setting & Value \\
\midrule
Model and policies & Qwen3-1.7B-Base; shared Initial and Full seeds 0--3
at steps 250/500/750/1000 \\
Prompts & 200 prompts sampled without replacement from Test 1's
300 gradient prompts, shared across policies \\
Responses & Each policy's first saved on-policy response per prompt \\
Response generation & Temperature 1, top-\(p=1\), no top-\(k\) truncation;
prompt/response limits 1,024/4,096 \\
Positions & Eight uniformly selected positions per response;
identical for Real and Uniform \\
Token pairs & Eight independent pairs per position per condition,
sampled with replacement \\
Real distribution & The model's full softmax probabilities \\
Uniform distribution & Uniform over the full vocabulary, including special tokens \\
Parameter groups & All; current-policy low-Fisher
99.95\%, 99.9\%, 99.5\%, 99\%, and 95\% \\
Group selection & Each policy's own 800-response sequence-Fisher estimate \\
\bottomrule
\end{tabularx}
\end{table}

\paragraph{Measurement and averaging.}
The selected prompts and positions are fixed before measurement and
are not filtered by correctness. For a parameter group \(\mathcal G\)
and a sampled pair \((a,b)\), we compute
\[
D_{\mathcal G}(s,a,b)
=\frac1{2|\mathcal G|}
 \sum_{i\in\mathcal G}\bigl(j_{i,a}(s)-j_{i,b}(s)\bigr)^2.
\]
This is obtained by backpropagating \((z_a-z_b)/\sqrt2\), squaring
each coordinate gradient, and averaging within the group. Model
parameters are not updated. All six groups share each condition's
pairs and backward calculations. Equal-token pairs contribute zero
and remain in the sample count.

For each condition, we average the eight pairs at each position,
then the eight positions within a response, and finally the 200 prompt
means with equal weight. Each policy therefore uses 1,600 prefixes
and 12,800 pairs per condition. This estimates mean token Fisher over
sampled positions, not the unnormalized sequence Fisher used in Test 1.
Across the four training seeds, Real and Uniform are averaged
separately before taking their ratio; Initial is measured once.
The comparison keeps the original logit sensitivities fixed while
changing both probability concentration and assignment.

%% file: sections/appendix_experiment_results.tex
\section{Additional Experimental Results}
\label{app:experiment-results}

The following results supplement Sections~\ref{sec:test1} and~\ref{sec:test2}.
In the fixed-mask learning tables, entries are equal-weight means over
training seeds 0--3.
R0 and R1 denote the two random masks; Random averages their scores.
Recovery is computed within each training seed before averaging, and
\(\Delta\) denotes Fisher minus Random in percentage points.
Training and evaluation settings are given in
Appendix~\ref{app:test2-settings}.

\subsection{RL: MATH-500}
\label{app:test2-results}

Table~\ref{tab:app-test2-budgets} reports the numerical values for the
MATH-500 comparison in the main text. The score is the mean binary
reward over four responses to each of 500 problems.
Initial scores \(22.65\%\); its evaluation and the Full results are
shared across mask budgets.

\begin{table}[htbp]
\centering
\small
\setlength{\tabcolsep}{3pt}
\caption{RL on MATH-500: four-training-seed mean response accuracy (\%).
The initial model is Qwen3-1.7B-Base. All models are evaluated after
1,000 updates. Recovery is a percentage and \(\Delta\) is in percentage
points.}
\label{tab:app-test2-budgets}
\begin{tabular}{@{}rrrrrrrr@{}}
\toprule
\(q\) (\%) & Full & Fisher & R0 & R1 & Random & Recovery & \(\Delta\) \\
\midrule
0.05 & 56.41 & 49.65 & 22.75 & 23.66 & 23.21 & 79.98 & 26.44 \\
0.1 & 56.41 & 51.41 & 24.36 & 23.44 & 23.90 & 85.21 & 27.51 \\
0.5 & 56.41 & 54.00 & 26.39 & 26.45 & 26.42 & 92.87 & 27.58 \\
1 & 56.41 & 55.03 & 30.93 & 31.21 & 31.07 & 95.90 & 23.96 \\
5 & 56.41 & 56.01 & 47.58 & 47.60 & 47.59 & 98.82 & 8.43 \\
\bottomrule
\end{tabular}
\end{table}

\subsection{OPD: Fisher--Gradient Support Recall}
\label{app:opd-test1-results}

We extend the same-policy Fisher--gradient support comparison in
Section~\ref{sec:test1} to OPD. Table~\ref{tab:app-opd-gradient-recall}
compares the highest-Fisher coordinates with those having the largest
gradient magnitudes at the shared initial policy and checkpoints from
full-parameter OPD training. Checkpoint results average training seeds
0 and 1; the initial result is shared.

\begin{table}[htbp]
\centering
\small
\setlength{\tabcolsep}{6pt}
\caption{OPD Fisher--gradient support recall (\%), supplementing
Section~\ref{sec:test1}. At each policy, recall is the fraction of the
top-\(q\%\) gradient-magnitude coordinates captured by the top-\(q\%\)
Fisher coordinates. Checkpoint rows are equal-weight means over full-training
seeds 0 and 1; the shared initial policy is measured once.
Random gives the analytic expectation for a uniform subset of the same size.}
\label{tab:app-opd-gradient-recall}
\begin{tabular}{@{}lrrr@{}}
\toprule
Checkpoint & \(q=0.5\%\) & \(q=1\%\) & \(q=5\%\) \\
\midrule
Initial (step 0) & 37.12 & 35.66 & 37.84 \\
Step 5 & 43.26 & 41.05 & 41.90 \\
Step 10 & 47.64 & 45.63 & 47.64 \\
Step 15 & 47.75 & 45.80 & 48.25 \\
Step 20 & 49.98 & 47.86 & 50.11 \\
\midrule
Random & 0.50 & 1.00 & 5.00 \\
\bottomrule
\end{tabular}
\end{table}

\subsection{OPD: AIME24}
\label{app:opd-results}

We report AIME24 using the OPD protocol's chat non-thinking evaluation,
which matches the training prompt format. Table~\ref{tab:app-opd-aime24}
contains the completed four-seed comparisons at \(q=0.5\%\), \(1\%\), and \(5\%\).
Avg@8 averages all \(30\times8\) binary rewards; pass@8 is the percentage
of problems answered correctly at least once among the eight responses.
The shared initial Qwen3-1.7B student scores \(16.25\%\) Avg@8 and
\(36.67\%\) pass@8. These OPD scores use a different initial model and
evaluation protocol from the RL results above.

\begin{table}[htbp]
\centering
\footnotesize
\setlength{\tabcolsep}{3pt}
\caption{OPD on AIME24: means over four training seeds after 20 updates.
Scores and Recovery are percentages; \(\Delta\) is in percentage points.
Both metrics use the same eight responses per problem.}
\label{tab:app-opd-aime24}
\begin{tabular}{@{}rlrrrrrrr@{}}
\toprule
\(q\) (\%) & Metric & Full & Fisher & R0 & R1 & Random & Recovery & \(\Delta\) \\
\midrule
0.5 & Avg@8 & 26.67 & 25.10 & 14.69 & 15.42 & 15.05 & 89.49 & 10.05 \\
0.5 & pass@8 & 47.50 & 52.50 & 33.33 & 30.83 & 32.08 & 156.25 & 20.42 \\
\midrule
1 & Avg@8 & 26.67 & 26.15 & 15.00 & 16.46 & 15.73 & 99.47 & 10.42 \\
1 & pass@8 & 47.50 & 55.00 & 29.17 & 38.33 & 33.75 & 185.42 & 21.25 \\
\midrule
5 & Avg@8 & 26.67 & 26.25 & 24.27 & 23.33 & 23.80 & 106.59 & 2.45 \\
5 & pass@8 & 47.50 & 50.83 & 49.17 & 45.83 & 47.50 & 147.92 & 3.33 \\
\bottomrule
\end{tabular}
\end{table}

%% file: main_arxiv.bbl
\begin{thebibliography}{24}
\providecommand{\natexlab}[1]{#1}
\providecommand{\url}[1]{\texttt{#1}}
\expandafter\ifx\csname urlstyle\endcsname\relax
  \providecommand{\doi}[1]{doi: #1}\else
  \providecommand{\doi}{doi: \begingroup \urlstyle{rm}\Url}\fi

\bibitem[Adewuyi et~al.(2026)Adewuyi, Okibe, and Ivanov]{adewuyi2026multiple}
Israel Adewuyi, Solomon Okibe, and Vladmir Ivanov.
\newblock The multiple ticket hypothesis: Random sparse subnetworks suffice for
  {RLVR}.
\newblock \emph{arXiv preprint arXiv:2602.01599}, 2026.
\newblock URL \url{https://arxiv.org/abs/2602.01599}.

\bibitem[Agarwal et~al.(2024)Agarwal, Vieillard, Zhou, Stanczyk, Garea, Geist,
  and Bachem]{agarwal2024onpolicy}
Rishabh Agarwal, Nino Vieillard, Yongchao Zhou, Piotr Stanczyk, Sabela~Ramos
  Garea, Matthieu Geist, and Olivier Bachem.
\newblock On-policy distillation of language models: Learning from
  self-generated mistakes.
\newblock In \emph{International Conference on Learning Representations}, 2024.
\newblock URL
  \url{https://proceedings.iclr.cc/paper_files/paper/2024/hash/5be69a584901a26c521c2b51e40a4c20-Abstract-Conference.html}.

\bibitem[{DeepSeek-AI}(2025)]{deepseekai2025r1}
{DeepSeek-AI}.
\newblock {DeepSeek-R1}: Incentivizing reasoning capability in {LLMs} via
  reinforcement learning.
\newblock \emph{arXiv preprint arXiv:2501.12948}, 2025.
\newblock URL \url{https://arxiv.org/abs/2501.12948}.

\bibitem[Geva et~al.(2022)Geva, Caciularu, Wang, and
  Goldberg]{geva2022transformer}
Mor Geva, Avi Caciularu, Kevin Wang, and Yoav Goldberg.
\newblock Transformer feed-forward layers build predictions by promoting
  concepts in the vocabulary space.
\newblock In \emph{Proceedings of the 2022 Conference on Empirical Methods in
  Natural Language Processing}, pp.\  30--45. Association for Computational
  Linguistics, 2022.
\newblock \doi{10.18653/v1/2022.emnlp-main.3}.

\bibitem[Gurnee et~al.(2024)Gurnee, Horsley, Guo, Kheirkhah, Sun, Hathaway,
  Nanda, and Bertsimas]{gurnee2024universal}
Wes Gurnee, Theo Horsley, Zifan~Carl Guo, Tara~Rezaei Kheirkhah, Qinyi Sun,
  Will Hathaway, Neel Nanda, and Dimitris Bertsimas.
\newblock Universal neurons in {GPT2} language models.
\newblock \emph{arXiv preprint arXiv:2401.12181}, 2024.

\bibitem[Lightman et~al.(2023)Lightman, Kosaraju, Burda, Edwards, Baker, Lee,
  Leike, Schulman, Sutskever, and Cobbe]{lightman2023lets}
Hunter Lightman, Vineet Kosaraju, Yura Burda, Harri Edwards, Bowen Baker, Teddy
  Lee, Jan Leike, John Schulman, Ilya Sutskever, and Karl Cobbe.
\newblock Let's verify step by step.
\newblock \emph{arXiv preprint arXiv:2305.20050}, 2023.

\bibitem[Liu et~al.(2025)Liu, Chen, Li, Qi, Pang, Du, Lee, and
  Lin]{liu2025r1zero}
Zichen Liu, Changyu Chen, Wenjun Li, Penghui Qi, Tianyu Pang, Chao Du, Wee~Sun
  Lee, and Min Lin.
\newblock Understanding {R1-Zero}-like training: A critical perspective.
\newblock \emph{arXiv preprint arXiv:2503.20783}, 2025.
\newblock URL \url{https://arxiv.org/abs/2503.20783}.

\bibitem[Luo et~al.(2026)Luo, Wang, Zhou, Ye, Zhao, Zhang, and
  Wei]{luo2026rlforgets}
Mao-Lin Luo, Zhe-Xu Wang, Zi-Hao Zhou, Bo~Ye, Jian Zhao, Min-Ling Zhang, and
  Tong Wei.
\newblock {RL} forgets! {Towards} continual policy optimization.
\newblock \emph{arXiv preprint arXiv:2607.04364}, 2026.

\bibitem[Martens(2020)]{martens2020natural}
James Martens.
\newblock New insights and perspectives on the natural gradient method.
\newblock \emph{Journal of Machine Learning Research}, 21\penalty0
  (146):\penalty0 1--76, 2020.
\newblock URL \url{https://www.jmlr.org/papers/v21/17-678.html}.

\bibitem[Miahi \& Belilovsky(2026)Miahi and Belilovsky]{miahi2026understanding}
Erfan Miahi and Eugene Belilovsky.
\newblock Understanding and exploiting weight update sparsity for
  communication-efficient distributed {RL}.
\newblock arXiv preprint arXiv:2602.03839, 2026.

\bibitem[Mukherjee et~al.(2025)Mukherjee, Yuan, Hakkani-Tur, and
  Peng]{mukherjee2025subnetworks}
Sagnik Mukherjee, Lifan Yuan, Dilek Hakkani-Tur, and Hao Peng.
\newblock Reinforcement learning finetunes small subnetworks in large language
  models.
\newblock In \emph{Advances in Neural Information Processing Systems},
  volume~38, 2025.
\newblock \doi{10.52202/085713-4399}.
\newblock URL
  \url{https://proceedings.neurips.cc/paper_files/paper/2025/hash/bf235a1d6780afd979f2f81676f43413-Abstract-Conference.html}.

\bibitem[Mukherjee et~al.(2026)Mukherjee, Yuan, Jayasinha, Hakkani-T{\"u}r, and
  Peng]{mukherjee2026we}
Sagnik Mukherjee, Lifan Yuan, Pavan Jayasinha, Dilek Hakkani-T{\"u}r, and Hao
  Peng.
\newblock Do we need {Adam}? {Surprisingly} strong and sparse reinforcement
  learning with {SGD} in {LLMs}.
\newblock \emph{arXiv preprint arXiv:2602.07729}, 2026.
\newblock \doi{10.48550/arXiv.2602.07729}.
\newblock URL \url{https://arxiv.org/abs/2602.07729}.

\bibitem[Ouyang et~al.(2022)Ouyang, Wu, Jiang, Almeida, Wainwright, Mishkin,
  Zhang, Agarwal, Slama, Ray, Schulman, Hilton, Kelton, Miller, Simens, Askell,
  Welinder, Christiano, Leike, and Lowe]{ouyang2022instructgpt}
Long Ouyang, Jeffrey Wu, Xu~Jiang, Diogo Almeida, Carroll~L. Wainwright, Pamela
  Mishkin, Chong Zhang, Sandhini Agarwal, Katarina Slama, Alex Ray, John
  Schulman, Jacob Hilton, Fraser Kelton, Luke Miller, Maddie Simens, Amanda
  Askell, Peter Welinder, Paul~F. Christiano, Jan Leike, and Ryan Lowe.
\newblock Training language models to follow instructions with human feedback.
\newblock In \emph{Advances in Neural Information Processing Systems},
  volume~35, 2022.
\newblock \doi{10.52202/068431-2011}.
\newblock URL
  \url{https://proceedings.neurips.cc/paper/2022/hash/b1efde53be364a73914f58805a001731-Abstract-Conference.html}.

\bibitem[Peters et~al.(2005)Peters, Vijayakumar, and Schaal]{peters2005natural}
Jan Peters, Sethu Vijayakumar, and Stefan Schaal.
\newblock Natural actor-critic.
\newblock In \emph{Machine Learning: ECML 2005}, pp.\  280--291. Springer,
  2005.

\bibitem[Raykov \& Veiga(2026)Raykov and Veiga]{raykov2026information}
Yordan Raykov and Rodrigo Veiga.
\newblock Information-geometric forward policy training in {GFlowNets}.
\newblock \emph{arXiv preprint arXiv:2608.03967}, 2026.

\bibitem[Rios et~al.(2025)Rios, Dognin, Luss, and
  Natesan~Ramamurthy]{rios2025sparsity}
Jesus Rios, Pierre Dognin, Ronny Luss, and Karthikeyan Natesan~Ramamurthy.
\newblock Sparsity may be all you need: Sparse random parameter adaptation.
\newblock In \emph{Findings of the Association for Computational Linguistics:
  EMNLP 2025}, pp.\  18650--18666, 2025.
\newblock \doi{10.18653/v1/2025.findings-emnlp.1013}.
\newblock URL \url{https://aclanthology.org/2025.findings-emnlp.1013/}.

\bibitem[Shao et~al.(2024)Shao, Wang, Zhu, Xu, Song, Bi, Zhang, Zhang, Li, Wu,
  and Guo]{shao2024deepseekmath}
Zhihong Shao, Peiyi Wang, Qihao Zhu, Runxin Xu, Junxiao Song, Xiao Bi, Haowei
  Zhang, Mingchuan Zhang, Y.~K. Li, Y.~Wu, and Daya Guo.
\newblock {DeepSeekMath}: Pushing the limits of mathematical reasoning in open
  language models.
\newblock \emph{arXiv preprint arXiv:2402.03300}, 2024.
\newblock URL \url{https://arxiv.org/abs/2402.03300}.

\bibitem[Sharma et~al.(2024)Sharma, Muralidhar, Xu, Yousuf, and
  Ramakrishnan]{sharma2024information}
Mandar Sharma, Nikhil Muralidhar, Shengzhe Xu, Raquib~Bin Yousuf, and Naren
  Ramakrishnan.
\newblock Information guided regularization for fine-tuning language models.
\newblock In \emph{Conference on Language Modeling}, 2024.

\bibitem[Shenfeld et~al.(2026)Shenfeld, Pari, and Agrawal]{shenfeld2025razor}
Idan Shenfeld, Jyothish Pari, and Pulkit Agrawal.
\newblock {RL}'s razor: Why online reinforcement learning forgets less.
\newblock In \emph{International Conference on Learning Representations}, 2026.
\newblock URL \url{https://openreview.net/forum?id=7HNRYT4V44}.

\bibitem[Sung et~al.(2021)Sung, Nair, and Raffel]{sung2021fish}
Yi-Lin Sung, Varun Nair, and Colin~A. Raffel.
\newblock Training neural networks with fixed sparse masks.
\newblock In \emph{Advances in Neural Information Processing Systems},
  volume~34, pp.\  24193--24205, 2021.
\newblock URL
  \url{https://proceedings.neurips.cc/paper/2021/hash/cb2653f548f8709598e8b5156738cc51-Abstract.html}.

\bibitem[Xu et~al.(2021)Xu, Luo, Zhang, Tan, Chang, Huang, and
  Huang]{xu2021childtuning}
Runxin Xu, Fuli Luo, Zhiyuan Zhang, Chuanqi Tan, Baobao Chang, Songfang Huang,
  and Fei Huang.
\newblock Raise a child in large language model: Towards effective and
  generalizable fine-tuning.
\newblock In \emph{Proceedings of the 2021 Conference on Empirical Methods in
  Natural Language Processing}, pp.\  9514--9528, 2021.
\newblock \doi{10.18653/v1/2021.emnlp-main.749}.
\newblock URL \url{https://aclanthology.org/2021.emnlp-main.749/}.

\bibitem[Yang et~al.(2025)Yang, Li, Yang, Zhang, Hui, Zheng,
  et~al.]{yang2025qwen3}
An~Yang, Anfeng Li, Baosong Yang, Beichen Zhang, Binyuan Hui, Bo~Zheng, et~al.
\newblock {Qwen3} technical report.
\newblock \emph{arXiv preprint arXiv:2505.09388}, 2025.
\newblock URL \url{https://arxiv.org/abs/2505.09388}.

\bibitem[Yu et~al.(2026)Yu, Liu, Hu, Ma, Jiang, and Ye]{yu2026dense}
Guo Yu, Wenlin Liu, Yulan Hu, Hao-Xuan Ma, Jun-Peng Jiang, and Han-Jia Ye.
\newblock Dense supervision, sparse updates: On the sparsity and geometry of
  on-policy distillation.
\newblock \emph{arXiv preprint arXiv:2606.13657}, 2026.
\newblock URL \url{https://arxiv.org/abs/2606.13657}.

\bibitem[Zhu et~al.(2025)Zhu, Zhang, Huang, Su, Liu, Zhao, Fedorov, Pirsiavash,
  Sha, Lee, Pan, Wang, Tian, and Tai]{zhu2025path}
Hanqing Zhu, Zhenyu Zhang, Hanxian Huang, DiJia Su, Zechun Liu, Jiawei Zhao,
  Igor Fedorov, Hamed Pirsiavash, Zhizhou Sha, Jinwon Lee, David~Z. Pan,
  Zhangyang Wang, Yuandong Tian, and Kai~Sheng Tai.
\newblock The path not taken: {RLVR} provably learns off the principals.
\newblock \emph{arXiv preprint arXiv:2511.08567}, 2025.
\newblock URL \url{https://arxiv.org/abs/2511.08567}.

\end{thebibliography}
